\pdfoutput=1
\documentclass[11pt]{article}

\usepackage{acl}

\usepackage{times}
\usepackage{latexsym}
\usepackage[T1]{fontenc}
\usepackage[utf8]{inputenc}
\usepackage{microtype}
\usepackage{inconsolata}
\usepackage{graphicx}

\usepackage{amsmath,amssymb}
\usepackage{amsthm}
\usepackage{mathtools} % \coloneqq used in §3

\usepackage{booktabs}
\usepackage{multirow}
\usepackage{xcolor}
\usepackage{enumitem}
\usepackage{pifont}    % \ding{} symbols in §3 pilot table
\usepackage{xurl}

\hypersetup{
  colorlinks=true,
  citecolor=blue,
  linkcolor=blue,
  urlcolor=blue
}

\graphicspath{{figures/}}

\newtheorem{definition}{Definition}
\newtheorem{proposition}{Proposition}

\title{World-Model Collapse as a Phase Transition}

\author{\textbf{Xinyuan Song}$^{1}$ \quad
    \textbf{Zekun Cai}$^{2,3}$ \\
    $^{1}$Emory University, Atlanta, GA, USA \quad
    $^{2}$The University of Tokyo, Tokyo, Japan \\
    $^{3}$LocationMind, Tokyo, Japan \\
    \texttt{xinyuan.song@emory.edu, caizekun@csis.u-tokyo.ac.jp} \\
}

\begin{document}
\maketitle

\begin{abstract}
Water looks unchanged as it warms, then at a critical point it
boils.  We ask whether long-horizon language agents show an
analogous transition in their implicit world models.  In some
parameter settings, changing state load by a small amount, or
adding a single step of horizon, leaves behavior nearly
unchanged; near a critical boundary, the same small change
causes a sudden world collapse.  We study this effect in a
deterministic task family with exact per-step gold state.  A
large grid search over state cardinality, dependency density,
horizon, branching, observation mode, and mutation rate reveals
a phase diagram: a solved plateau, a narrow transition band,
and a collapse floor.  Per-step traces show the mechanism:
world-state fidelity fails before action validity, so the agent
is not merely choosing a bad action; it is acting from a
corrupted world.  Stronger models translate the critical
boundary but do not remove the qualitative transition.  These
results make world-model collapse a measurable bottleneck for
long-horizon agents.  Code is available at
\url{https://github.com/Hik289/world-model-collapse.git}.
\end{abstract}

\section{Introduction}
\label{sec:intro}

Many systems look stable until a control parameter crosses a
critical value.  Water can be heated from $90^\circ$C to
$99^\circ$C without changing phase, but near $100^\circ$C a
small increase produces boiling.  We argue that long-horizon
language agents can fail in the same qualitative way.  An agent
may track a task, update memory, and choose plausible actions
for many steps; then a small increase in state load, dependency
density, or horizon pushes the implicit world model past a
critical point.  The result is not gradual degradation, but
world collapse: the agent continues to reason fluently while
the represented world it reasons over has become wrong.

This analogy is useful because phase transitions are not defined
only by dramatic outcomes; they are defined by the geometry of a
response surface.  Statistical physics distinguishes control
parameters, order parameters, critical regions, finite-size
crossovers, and precursors to collapse~\citep{stanley1973introduction,
goldenfeld2018lectures,sethna2021statistical}.  We use the same
operational language without claiming a thermodynamic limit.  In
our setting, state load and dependency density are control
parameters, task success is the order parameter, and world-state
fidelity is the precursor.  The question is whether the observed
surface looks like smooth drift or like a finite-grid phase
transition.

This view differs from the usual drift story.  In ReAct-style
and Reflexion-style agents~\citep{yao2023react,
shinn2023reflexion}, search scaffolds such as Tree-of-Thoughts
and Graph-of-Thoughts~\citep{yao2023tree,besta2024graph}, and
large agent benchmarks~\citep{shridhar2021alfworld,
wang2022scienceworld,yao2022webshop,zhou2023webarena,
liu2023agentbench,mialon2023gaia,jimenez2023swebench,
xie2024travelplanner,yao2024taubench,mazaheri2026agentatlas},
failure is often interpreted as smooth accumulation of local
errors.  If that is the whole story, more search, more
self-checking, or a longer horizon should often help.  If the
agent crosses a representational phase boundary, those tools
arrive too late unless they preserve the world state itself.  A
final success score cannot distinguish these cases; the agent
may make its first invalid move only after its internal world
has already collapsed.

We therefore stress the world model directly.  We separate
state cardinality, \textsc{sc}, the number of entities that
must remain jointly addressable, from dependency density,
\textsc{dd}, the number of preconditions that bind each subgoal
to the current state.  We then run dense grid searches over
these axes and over secondary controls such as horizon,
branching, observation mode, and mutation rate.  Smooth drift
predicts gradual surfaces.  A phase transition predicts a
different geometry: a solved plateau, a narrow critical band,
and a collapse floor.  It also predicts a temporal precursor:
world-state fidelity should fail before action validity.
Figure~\ref{fig:intuition} summarizes the phase-diagram view,
and the instrumented agent loop in Figure~\ref{fig:pipeline}
measures the ordering.

\begin{figure*}[t]
\centering
\includegraphics[width=.82\textwidth]{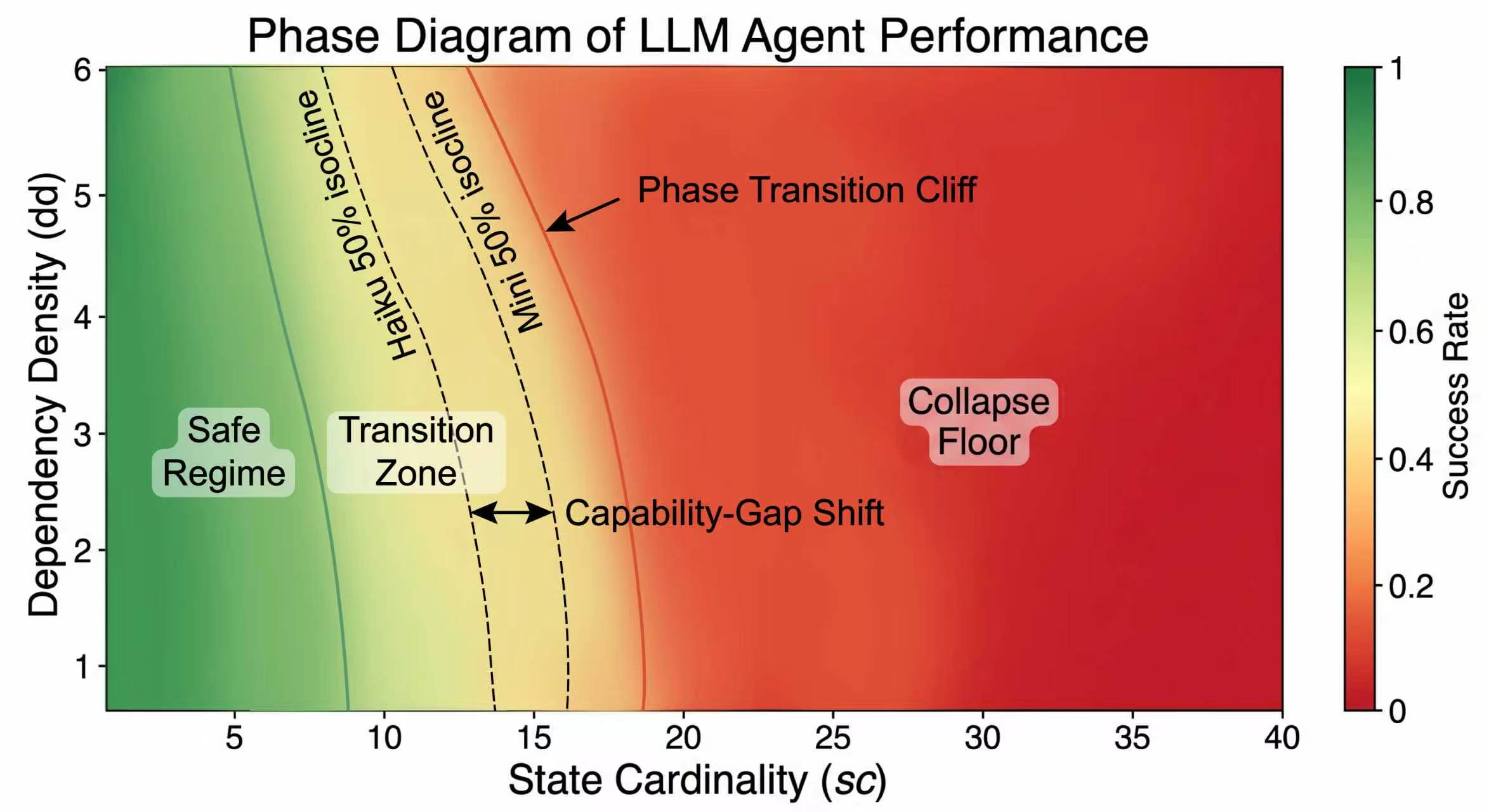}
\caption{Conceptual phase diagram for world-model collapse.
The $(\textsc{sc},\textsc{dd})$ plane partitions into a solved
regime, a narrow transition zone, and a collapse floor.  A
stronger or better-supported model shifts the boundary
rightward, but the qualitative shape remains.  The figure is a
mechanistic summary, not an additional data plot.}
\label{fig:intuition}
\end{figure*}

We test the story in StatefulPuzzle, a deterministic
environment with exact per-step gold state.  This control lets
us distinguish an agent that chooses a bad action from an agent
that no longer represents the world correctly.  The
confirmatory grid in Figure~\ref{fig:g1_heatmap} exposes a
sharp boundary in the $(\textsc{sc},\textsc{dd})$ plane, with
the cell values reported in Table~\ref{tab:g1_heatmap} and the
one-dimensional cross-sections in Figure~\ref{fig:monotone_cross}.
The fine scan in Figure~\ref{fig:critical_points} localizes the
critical point near $\textsc{sc}^{\star} \approx 13.5$ for
the main setting.  The cross-model comparison in
Figure~\ref{fig:cross_model_comparison} and
Table~\ref{tab:capacity_ranking} shows that stronger models
translate the boundary rather than erasing it.
Figure~\ref{fig:ablations4panel} and
Table~\ref{tab:g2_ablations} check that the collapse is not a
disguised effect of horizon, branching, observation noise, or
mutation rate alone.

The paper follows that chain of evidence.  We first build a
controlled phase diagram, then use per-step traces to identify
world-state failure as the precursor, then perturb alternative
axes to rule out simpler explanations.  The result is not an
agent leaderboard.  It is a measurement of where a model--agent
pair loses the world it is trying to reason over.

\paragraph{Contributions.}
First, we formulate long-horizon agent collapse as a
two-dimensional phase-transition problem in world-model
capacity.  Second, we introduce a controlled evaluation that
separates state load from dependency load while logging
world-state accuracy and action validity at every step.  Third,
we show that world-state collapse is the temporal precursor to
plan collapse, turning an apparent action-selection error into
a representational failure.  Fourth, we demonstrate that model
capability shifts the boundary and that secondary stress axes
modulate, but do not replace, the governing
$(\textsc{sc},\textsc{dd})$ structure.

\section{Related Work}
\label{sec:related}

\paragraph{Agent benchmarks and failure analyses.}
Interactive benchmarks have made language-agent failure visible
across text worlds, web tasks, software engineering, travel
planning, tool use, and general-assistant settings
~\citep{shridhar2021alfworld,wang2022scienceworld,yao2022webshop,
zhou2023webarena,liu2023agentbench,mialon2023gaia,
jimenez2023swebench,yang2024sweagent,xie2024travelplanner,
yao2024taubench,mazaheri2026agentatlas,ding2026wildclawbench}.
Recent evaluation critiques similarly warn that outcome-only
benchmarks can conflate accuracy, cost, harness effects, and
reproducibility~\citep{kapoor2025agentsmatter}.  Their strength
is ecological breadth; their weakness for our question is that
final success typically confounds state size, dependency
complexity, observation noise, horizon, and interface effects.
Tool-use benchmarks and training sets add realistic API
structure~\citep{schick2023toolformer,qin2023toolllm,
guo2024stabletoolbench}, while recent controllable planning
data targets verifiability~\citep{zhao2026planningbench}.  We
take the complementary route: sacrifice realism to obtain exact
per-step state and orthogonal stress axes, which are necessary
to test the \emph{geometry} of failure.

\paragraph{World models in LLM agents.}
The world-model framing has a long
lineage~\citep{ha2018worldmodels,lecun2022jepa,
hafner2020dreamerv2,hafner2023dreamerv3,bruce2024genie}.  LLM
agents hold a world model only \emph{implicitly}, inside the
in-context state representation maintained by the prompt and
memory.  Recent probes of this implicit
state~\citep{chen2023llmstate,hou2026wmfam,zhu2026pddlmind,
samiei2025fsm,chao2026stale} and world-model planning
benchmarks~\citep{chen2025worldprediction} study whether
models can maintain, revise, or use state.  Cognitive-science
critiques~\citep{mahowald2024dissociating} ask what is
missing architecturally; we ask \emph{when it breaks}.

\paragraph{Planning failures in LLMs.}
\citet{kambhampati2024canllmsplan,kambhampati2024modulo} argue
LLMs are approximate retrievers, not planners; the PlanBench
line~\citep{valmeekam2022planbench,valmeekam2023planning,
valmeekam2024strawberry,stechly2024selfverification} documents
sharp drops with problem complexity; hybrids externalise the
planner~\citep{liu2023llmp,silver2024generalized,xie2024barriers}.
Search-based prompting improves some deliberative tasks by
expanding the inference tree~\citep{yao2023tree,besta2024graph},
but it does not by itself explain when the state representation
supporting the search collapses.  We localise the boundary on
two specific axes and identify its temporal order:
world state first, plan validity second.

\paragraph{Memory and context budget.}
Memory taxonomies~\citep{wang2024agentsurvey,park2023generative,
sumers2024cognitive,packer2024memgpt} and context-budget
accounts of failure~\citep{liu2023lostinmiddle,
chung2025longcontextwebagents,zhou2025mem1,luo2025ultrahorizon,
ayyachamy2025contextdiscipline,fang2026agentlongbench} attribute
collapse to budget exhaustion, evidence pruning, or dynamic
context synthesis.  Revisitable and structured memory systems
try to counter that loss~\citep{shi2025rememr1,arslan2026aeon}.
Our $T$-axis ablation discriminates these mechanisms from a
pure horizon account: the cliff appears while episodes remain
well below the available context budget.

\paragraph{Phase transitions in neural networks.}
Classical statistical physics studies how macroscopic behavior
changes abruptly as a control parameter crosses a critical
point, with the transition described through order parameters,
critical regions, finite-size effects, and scaling
laws~\citep{stanley1973introduction, goldenfeld2018lectures,sethna2021statistical}.  This vocabulary has also shaped the
analysis of learning systems, where smooth scaling laws
coexist with sharp qualitative changes.
Smooth scaling laws~\citep{kaplan2020scaling} coexist with
sharp phenomena: emergent abilities~\citep{wei2022emergent}
with the metric-artifact caveat~\citep{schaeffer2023mirage},
grokking~\citep{power2022grokking,nanda2023grokking},
induction-head formation~\citep{olsson2022induction}, double
descent~\citep{belkin2019doubledescent,nakkiran2021deepdouble},
and statistical-mechanics treatments~\citep{mei2018meanfield,
bahri2020statmech,saxe2019semanticpnas,roberts2022principles}.
\citet{nakaishi2024criticalphase} extend this lineage to a
critical phase transition in static LLM output quality vs.\
decoding temperature.  We extend it further to
\emph{sequential agent planning dynamics} with task success as
the order parameter and a two-dimensional task-side control.

\paragraph{Gradual drift as the canonical baseline.}
The most direct contrast is the canonical-path-deviation
framework of \citet{lee2026canonical}, which predicts a smooth
sigmoid in any stress axis.  We adopt that smoothness
prediction as the canonical drift baseline.  Drift exists, but
it cannot explain the plateau-boundary-floor geometry that
appears under two-dimensional structural stress.

\section{Formal Framework}
\label{sec:formal}

Let $\mathcal{D}_{s,d,z}$ denote a distribution over finite
episodes.  The structural coordinates are state cardinality
$s\in\mathbb{N}$ and dependency density $d\in\mathbb{N}$.
The nuisance vector $z$ fixes horizon, branching, observation
mode, mutation rate, and all other factors not under study.
Each episode has gold world states
$W_{0:T}^{\ast}$ generated by a deterministic simulator,
\[
  W_{t+1}^{\ast}=F(W_t^{\ast},a_t;x),
  \qquad x\sim\mathcal{D}_{s,d,z}.
\]
The agent maintains an explicit working state $\widehat{W}_t$
and chooses $a_t=\pi_\theta(H_t,\widehat{W}_t)$ from the
interaction history $H_t$.  Final success is the Bernoulli
variable $Y=\mathbf{1}\{G(W_T^{\ast})=1\}$, where $G$ is the
gold goal predicate.  The order parameter is therefore
\[
  p_\theta(s,d;z)
  =\mathbb{E}_{x\sim\mathcal{D}_{s,d,z}, \pi_\theta}[Y].
\]

\begin{definition}[Finite-grid abrupt transition]
Fix $z$, finite grids $\mathcal{G}_s,\mathcal{G}_d$, a cliff
margin $\delta>0$, and plateau/floor thresholds
$0<\alpha<\beta<1$.  The observed surface has an abrupt
transition if there exist adjacent $s_i<s_{i+1}$ and some
$d\in\mathcal{G}_d$ such that
\[
  \widehat p_\theta(s_i,d;z)-\widehat p_\theta(s_{i+1},d;z)
  \ge \delta,
\]
and the same grid contains nonempty regimes with
$\widehat p_\theta\ge\beta$ and $\widehat p_\theta\le\alpha$.
For fixed $d$, the operational crossing is
\[
  s_\theta^\star(d;z)=
  \inf\{s: p_\theta(s,d;z)\le 1/2\}.
\]
\end{definition}

The mechanism is defined at the trajectory level.  Let
$A_t=\mathbf{1}\{a_t\text{ is valid in }W_t^{\ast}\}$ and let
$R_t=\rho(\widehat{W}_t,W_t^{\ast})$ be a gold-scored
world-state fidelity metric.  For a window $h=5$,
\[
\begin{aligned}
  \tau_W
  &=\inf\Bigl\{t:
    h^{-1}\sum_{j=0}^{h-1}R_{t-j}<1/2\Bigr\},\\
  \tau_A
  &=\inf\{t\ge1:A_t=0\}.
\end{aligned}
\]
A representational-collapse account predicts
$\tau_W<\tau_A$ on failed episodes: the world representation
crosses its failure threshold before the first invalid action.
The appendix gives the elementary grid-bracketing proof under
monotonicity.  The claim in the main text is finite-grid and
operational, not a thermodynamic-limit statement.

\section{Method}
\label{sec:method}

To isolate structural drivers of collapse from confounders
such as horizon length, interface noise, or model size, we use
controlled rule environments with exact per-step gold state.
The design goal is not to approximate a deployed web task; it
is to make the failure surface observable.  This follows the
controllable-benchmark tradition in planning and agent
evaluation~\citep{valmeekam2022planbench,zhao2026planningbench,
fang2026agentlongbench} while adopting the reproducibility
concerns raised by recent agent-evaluation
critiques~\citep{kapoor2025agentsmatter}.

\paragraph{Environments.}
\label{sec:method:envs}
We build three deterministic simulators with exact gold state
and seed-only randomness.  \textbf{GraphNav} tests navigation
through room-and-door graphs with keys, switches, and decoys.
\textbf{ToolDAG} tests typed tool-call planning under a
growing variable namespace.  \textbf{StatefulPuzzle}, the
selected confirmatory environment, asks the agent to move
objects across rooms, containers, and slots through ordered
subgoal chains.  In StatefulPuzzle, \textsc{sc} counts the
jointly maintained rooms, containers, and items, while
\textsc{dd} counts the preconditions tying each subgoal to the
current world state.  The appendix gives formal definitions and
reproducibility details.

\paragraph{Trigger-environment selection.}
\label{sec:method:trigger}
The trigger environment is fixed by a pilot-guided rule locked
before any agent evaluation: choose the environment whose
one-dimensional \textsc{dd} sweep ($n{=}10$ per level, two
model tiers)
satisfies (a) per-model monotonicity, (b) cross-model shape
agreement, and (c) $\min(\max\Delta\hat{p}) \ge 20\%$.
The rule is deliberately closer to benchmark construction than
to model selection: it chooses the simulator whose stress axis
produces a readable, monotone diagnostic surface, a practice
common in controlled planning benchmarks and long-horizon agent
rollout suites~\citep{valmeekam2022planbench,
fang2026agentlongbench,zhao2026planningbench}.
GraphNav fails (b); ToolDAG fails (c).  StatefulPuzzle
satisfies all three and is selected (Table~\ref{tab:pilot_shape}).

\begin{table}[t]
\centering
\footnotesize
\setlength{\tabcolsep}{3pt}
\begin{tabular}{lcccc}
\toprule
Env & mini & haiku & match & $\min\max\Delta\hat{p}$\\
\midrule
GraphNav  & hump   & u-shape  & \ding{55} & --- \\
ToolDAG   & flat   & monotone & \ding{55} & 0\% \\
\textbf{StatefulPuzzle} & \textbf{mono} & \textbf{mono} & \ding{51} & \textbf{30\%} \\
\bottomrule
\end{tabular}
\caption{Pilot environment selection from one-dimensional
\textsc{dd} sweeps ($n{=}10$ per level and model).  The
confirmatory grid is run on StatefulPuzzle because it is the
only simulator whose pilot curve is monotone for both model
tiers, has matching cross-model shape, and clears the
20-point minimum-drop criterion.}
\label{tab:pilot_shape}
\end{table}

\paragraph{Stress grid.}
\label{sec:method:grid}
The confirmatory grid covers
$\textsc{sc}\in\{5,10,20,40\}$ and
$\textsc{dd}\in\{1,2,4,6\}$ with secondary axes fixed at
Regime~III (horizon $T{=}40$, branching factor $4$, clean
observation, static mutation).
Each of 16 cells
contains $n{=}100$ episodes drawn from 10 archetypes $\times$
10 instance variants with sha256-derived seeds (1{,}600 unique
seeds, zero collisions).

\paragraph{Agent architecture.}
\label{sec:method:agent}
A three-call loop (Figure~\ref{fig:pipeline}) runs identically
at every step.  The \textbf{Planner} proposes the next action
and its expected state changes; the \textbf{Updater} rewrites
the explicit working memory after the simulator step; and
\textbf{Self-Diag} records a non-blocking validity judgment.
The separation between acting, memory update, and self-checking
is inspired by ReAct-style action loops, Reflexion-style
self-diagnostics, and structured memory architectures
~\citep{yao2023react,shinn2023reflexion,chen2023llmstate,
packer2024memgpt}.
The confirmatory grid uses claude-haiku-4-5 at temperature~$0$
~\citep{anthropic2025haiku45}.
Schema validation, retry policy, and budget triggers are
reported in the supplementary material.

\begin{figure}[t]
\centering
\includegraphics[width=.94\columnwidth]{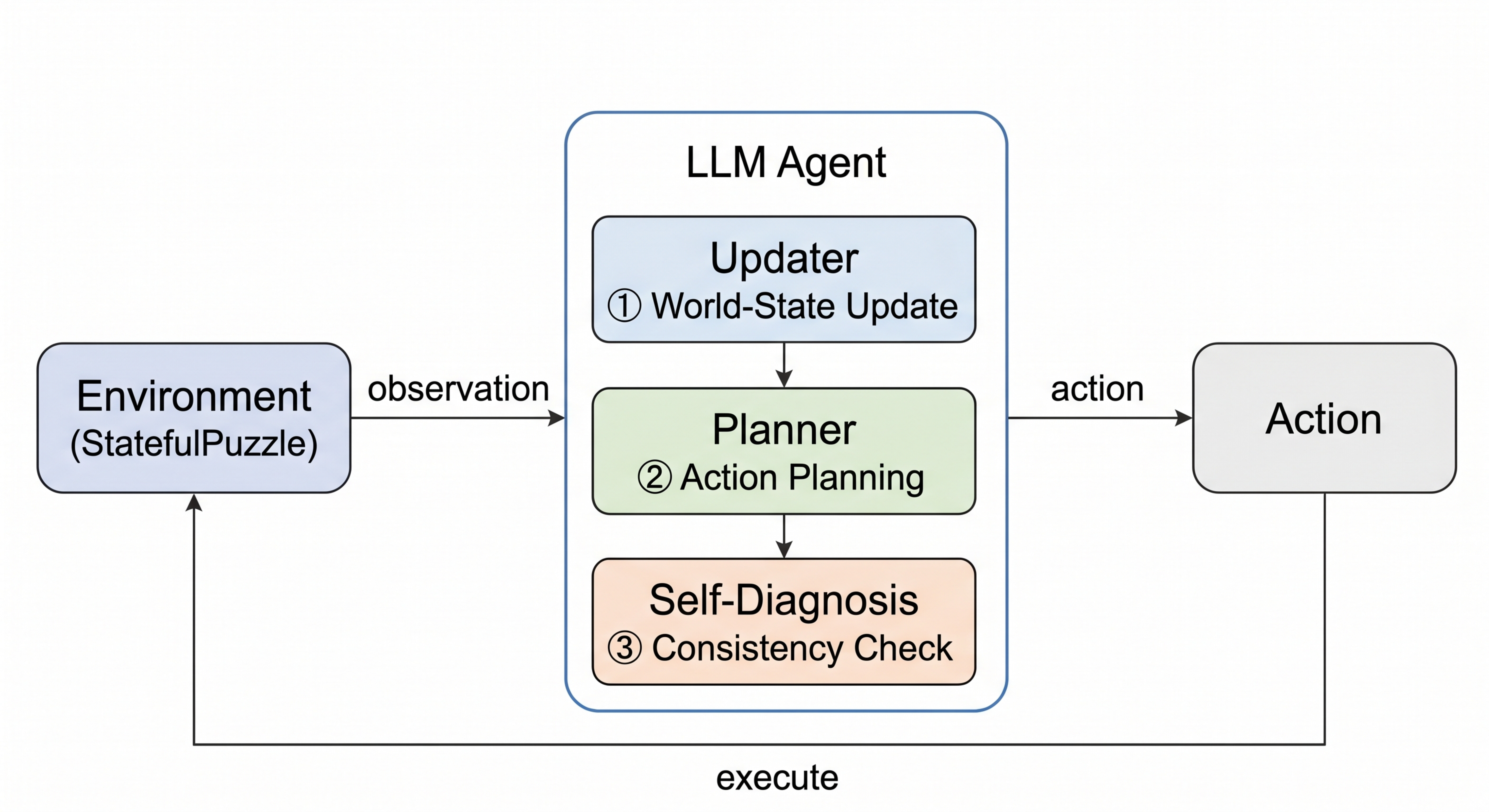}
\caption{Three-call agent loop used in every episode.  The
Planner proposes an action from structured memory, the Updater
rewrites the explicit world-state memory after the simulator
step, and Self-Diag records a non-blocking validity judgment.
Because the simulator exposes gold state after each action, the
evaluator can separate representational failure from action
failure while \textsc{sc}, \textsc{dd}, and the secondary axes
are controlled.}
\label{fig:pipeline}
\end{figure}

\subsection{Acceptance Criteria}
\label{sec:method:g1protocol}

The previous analysis asks two questions.  First, does
the grid contain an adjacent-pair cliff large enough to rule out
a smooth difficulty curve?  Second, when collapse occurs, do
the world-state and action-validity traces reveal a stable
temporal relationship?  The cliff criterion passes on the
confirmatory grid.  The original symmetric synchrony criterion
is too strict: the two failures are not simultaneous.  The
per-step trace instead gives the more informative mechanism,
world-state collapse first and plan collapse later.  The exact
test definitions and the locked analysis list are reported in
the appendix.

\section{Results}
\label{sec:results}

\subsection{Abrupt Transition on the Confirmatory Grid}
\label{sec:results:g1}

The confirmatory grid in Figure~\ref{fig:g1_heatmap} is the central empirical object of the paper: a phase diagram of world collapse rather than a smooth difficulty curve.

\begin{figure}[t]
\centering
\includegraphics[width=.96\columnwidth]{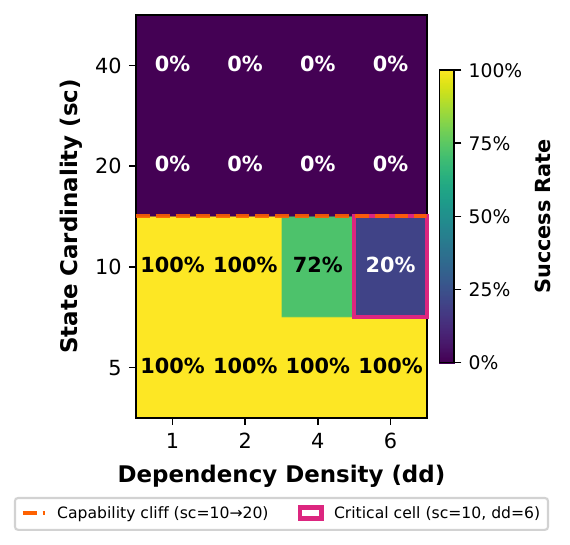}
\caption{Confirmatory StatefulPuzzle success-rate heatmap for
claude-haiku-4-5.  The grid separates into a solved plateau, a
narrow transition band, and a collapse floor.  The sharpest
boundary is along \textsc{sc}; within the transition band,
larger \textsc{dd} moves the operating point toward collapse.}
\label{fig:g1_heatmap}
\end{figure}

\begin{table}[t]
\centering
\small
\setlength{\tabcolsep}{4pt}
\caption{Full confirmatory grid behind
Figure~\ref{fig:g1_heatmap}.  Each cell reports the success
rate $\hat{p}$ over $n{=}100$ episodes.  Bold cells form the
solved plateau, italic cells are the transition band, and plain
cells are the collapse floor.  The table makes the core
geometry explicit: the boundary is not a slow diagonal decline,
but a plateau that falls sharply once state load crosses
capacity.}
\label{tab:g1_heatmap}
\begin{tabular}{r cccc}
\toprule
 & \multicolumn{4}{c}{\textbf{dependency density} (\textsc{dd})} \\
\cmidrule(lr){2-5}
\textbf{\textsc{sc}} & $dd{=}1$ & $dd{=}2$ & $dd{=}4$ & $dd{=}6$ \\
\midrule
\textbf{5}  & \textbf{1.00} & \textbf{1.00} & \textbf{1.00} & \textbf{1.00} \\
\textbf{10} & \textbf{1.00} & \textbf{1.00} & \textit{0.72} & \textit{0.20} \\
\textbf{20} & 0.00          & 0.00          & 0.00          & 0.00 \\
\textbf{40} & 0.00          & 0.00          & 0.00          & 0.00 \\
\bottomrule
\end{tabular}
\end{table}

\begin{figure*}[t]
\centering
\includegraphics[width=\textwidth]{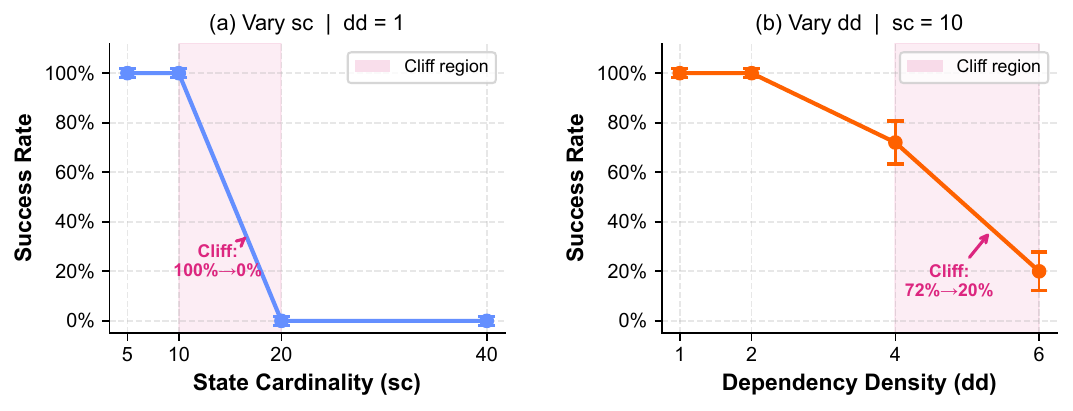}
\caption{One-dimensional cross-sections of the confirmatory
grid.  The \textsc{sc} cross-sections expose the adjacent
$\textsc{sc}{=}10\to20$ cliff at fixed \textsc{dd}.  The
\textsc{dd} cross-section at $\textsc{sc}{=}10$ shows that
dependency density matters mainly near the boundary, where it
moves the operating point through the transition band.}
\label{fig:monotone_cross}
\end{figure*}

\paragraph{Structure.}
\label{sec:results:g1:cliff}
At low state cardinality, the agent solves the task across the dependency range.  At high state cardinality, it collapses even when dependencies are sparse.  The middle row contains the transition: increasing dependency density moves the agent through the critical band.  This pattern is hard to reconcile with a one-dimensional difficulty curve.  The secondary axis matters near the critical region, but once the world model has collapsed, adding dependencies no longer changes the outcome.  The previous cliff criterion passes; the numerical tests are reported in the appendix.

The table and cross-sections are important because they show
two views of the same object.  Figure~\ref{fig:g1_heatmap}
shows the phase diagram globally, Table~\ref{tab:g1_heatmap}
shows that the plateau and floor are stable at the cell level,
and Figure~\ref{fig:monotone_cross} shows that the decisive
drop is localized along \textsc{sc}.  Dependency density does
not by itself create a long smooth decay; it changes where the
agent sits relative to the state-capacity boundary.

\paragraph{World-state collapse comes first.}
The per-step trace explains why the heatmap has this shape.  The world-state estimate loses fidelity before the planner's actions become invalid.  The original synchrony criterion looked for simultaneous collapse and therefore fails; the data instead show an asymmetric sequence.  This is the crucial mechanism: the plan fails because it is conditioned on a world representation that has already crossed its critical boundary.  The appendix gives the locked onset analysis.

\paragraph{Temporal mechanism details.}
\label{sec:results:g1b}
The paired collapsed episodes give a sharper picture than the
original synchrony test.  If collapse were a single
undifferentiated event, world-state failure and invalid action
would appear at the same step.  Instead, the lag distribution
is concentrated just before action failure: most paired
episodes show the world-state estimate failing first, the
median lead is two steps, and no paired episode shows a strict
plan-first collapse.  The symmetric synchrony rule captures
less than half of the paired collapses because the modal event
is just outside the simultaneous window.  Thus the
negative result on symmetric synchrony is not a weakness of the
mechanism; it corrects the mechanism.  The agent does not lose
state and plan at the same instant.  It first loses the state,
then acts from the wrong state.

This temporal ordering also explains why final success alone is
an insufficient diagnostic.  A trajectory may look coherent
until the first invalid action, but the representational error
has already occurred.  Per-step state instrumentation is
therefore not a convenience feature of the benchmark; it is
what makes the causal ordering visible.

\paragraph{Contrast with gradual drift.}
\label{sec:results:g1:contrast}
The canonical drift baseline~\citep{lee2026canonical} predicts a smooth profile as stress increases.  Here the doubled grid puts solved and failed regimes directly adjacent, while the fine scan in Figure~\ref{fig:critical_points} resolves a narrow transition rather than a long tail.  The collapse floor is also insensitive to \textsc{dd}: once state load exceeds capacity, denser dependencies cannot make the already-failed world much worse.  This combination--sharp cliff, narrow transition, and flat floor--is the empirical signature of world collapse.  It extends the phase-transition view of static LLM outputs~\citep{nakaishi2024criticalphase} to sequential world-state dynamics.

\subsection{Cross-Model Robustness}
\label{sec:results:crossmodel}

We next ask whether the boundary is an idiosyncrasy of one
checkpoint or a capability-dependent property.  The
gpt-4o-mini grid in Figure~\ref{fig:cross_model_comparison}
and Table~\ref{tab:g7_partial} uses the same environment and
agent harness as the confirmatory run.  GPT-4o and Llama-3 70B
provide additional cross-platform probes, summarized in
Table~\ref{tab:capacity_ranking}.  The probes cite the
corresponding model releases or system cards:
claude-haiku-4-5~\citep{anthropic2025haiku45},
gpt-4o-mini~\citep{openai2024gpt4omini},
GPT-4o~\citep{openai2024gpt4osystem}, and
Llama-3 70B~\citep{grattafiori2024llama3}.

\begin{figure*}[t]
\centering
\includegraphics[width=0.95\textwidth]{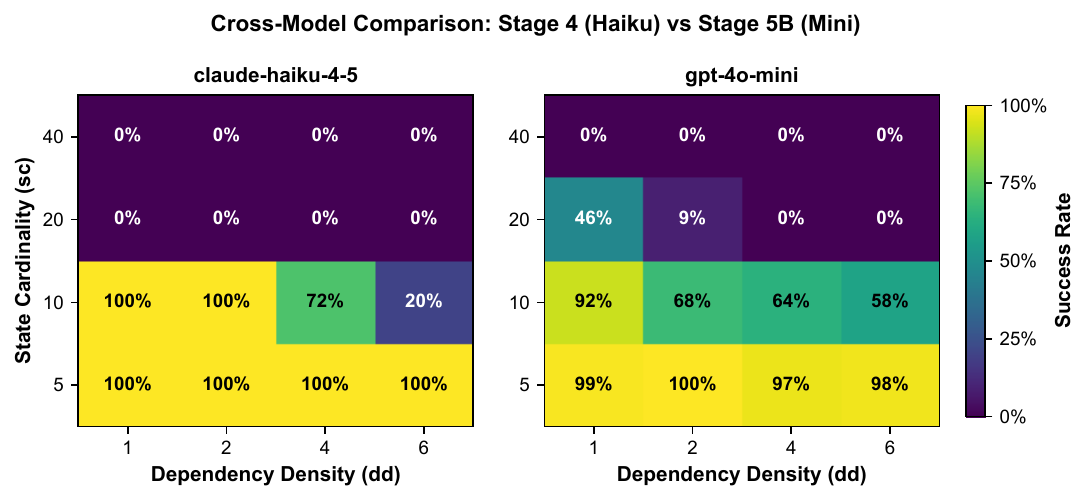}
\caption{Cross-model comparison on the StatefulPuzzle grid.
\emph{Left:} claude-haiku-4-5 confirmatory grid.
\emph{Right:} gpt-4o-mini grid on the same stress axes.  The
qualitative phase structure is preserved--plateau, transition
band, and floor--but the boundary moves.  GPT-4o and Llama-3
70B probes are summarized in Table~\ref{tab:capacity_ranking}.}
\label{fig:cross_model_comparison}
\end{figure*}

\begin{table}[t]
\centering
\small
\setlength{\tabcolsep}{4pt}
\caption{Completed gpt-4o-mini cells used in
Figure~\ref{fig:cross_model_comparison}.  The easy
$\textsc{sc}{=}5$ row remains on the plateau, the
$\textsc{sc}{=}10$ row becomes the transition band, and the
$\textsc{sc}{=}20$ row approaches the floor as \textsc{dd}
increases.  This is the same phase-diagram shape as
claude-haiku-4-5, shifted to a different boundary location.}
\label{tab:g7_partial}
\begin{tabular}{r r r c}
\toprule
\textbf{\textsc{sc}} & \textbf{\textsc{dd}} & \textbf{n} & $\hat{p}$ \\
\midrule
5  & 1 & 100 & 0.990 \\
5  & 2 & 100 & 1.000 \\
5  & 4 & 100 & 0.970 \\
5  & 6 & 100 & 0.980 \\
10 & 1 & 100 & 0.920 \\
10 & 2 & 100 & 0.680 \\
10 & 4 & 100 & 0.640 \\
10 & 6 & 100 & 0.580 \\
20 & 1 & 125 & 0.464 \\
20 & 2 & 146 & 0.089 \\
20 & 4 & 114 & 0.000 \\
20 & 6 & 100 & 0.000 \\
40 & 1 & 100 & 0.000 \\
40 & 2 & 100 & 0.000 \\
40 & 4 & 100 & 0.000 \\
40 & 6 & 100 & 0.000 \\
\bottomrule
\end{tabular}
\end{table}

The important result is not the cell-level numbers, but the preservation of shape.  The easy cells remain easy, the high-\textsc{sc} cells still expose a floor, and the intermediate cells form a shifted transition band.  In other words, changing the model translates the phase boundary rather than turning world collapse into smooth drift.
Table~\ref{tab:g7_partial} is useful precisely because it shows
where that translation happens: the $\textsc{sc}{=}10$ row is
no longer a near-binary cliff, while the higher-\textsc{sc}
cells still reveal the same capacity limit.

\paragraph{GPT-4o corner cells.}
\label{sec:results:crossmodel:gpt4o}
GPT-4o gives the same qualitative message.  It lifts difficult interior cells, but still encounters a cliff as state cardinality increases.  Capability moves the operating point and delays collapse; it does not remove the need to measure the boundary.

The corner probes sharpen this point.  GPT-4o solves the easy
corner and lifts the hardest $\textsc{sc}{=}10$ corner, but it
does not make the $\textsc{sc}{=}20$ row uniformly reliable.
Llama-3 70B, by contrast, sits below the observed boundary in
the probed cells, with the endpoint-determinism caveat noted in
the limitations.  The comparison is therefore best read as a
translation of the operating boundary under model and harness,
not as a universal capability leaderboard.

The cross-model evidence is a capability gap acting on the
\emph{same} phase structure: every qualitative feature
(monotone descent, plateau and floor, sharp
intermediate-\textsc{sc} drop) carries over to mini and
GPT-4o.  Llama-3 70B is treated separately as a lower-bound
probe with an endpoint-determinism caveat.  The robust
finding is boundary translation, not disappearance of the
boundary.

\subsection{Cross-Environment Considerations}
\label{sec:results:crossenv}

The trigger-selection rule in Table~\ref{tab:pilot_shape}
selected StatefulPuzzle because it was the only pilot
environment whose curves were monotone and comparable across
model tiers.  GraphNav and ToolDAG are useful negative pilots:
they show that not every controllable environment automatically
produces a clean phase diagram.  We therefore make the strong
claim within StatefulPuzzle and treat cross-environment
replication as a separate question requiring environment-
specific instrumentation.

\subsection{Single-Axis Ablations}
\label{sec:results:ablations}

The phase diagram should not be a disguised proxy for a single
secondary axis.  We therefore vary horizon, branching,
observation mode, and mutation rate one at a time around a
transition-zone backdrop.  Figure~\ref{fig:ablations4panel}
summarizes the response curves, and
Table~\ref{tab:g2_ablations} reports the corresponding cells.
The backdrop matters: if the ablations were run deep inside the
plateau, every axis would look harmless; if they were run deep
inside the floor, every axis would look irrelevant.  We place
them near the boundary so that a genuine alternative driver has
room to express itself.

\begin{figure*}[t]
\centering
\includegraphics[width=\textwidth]{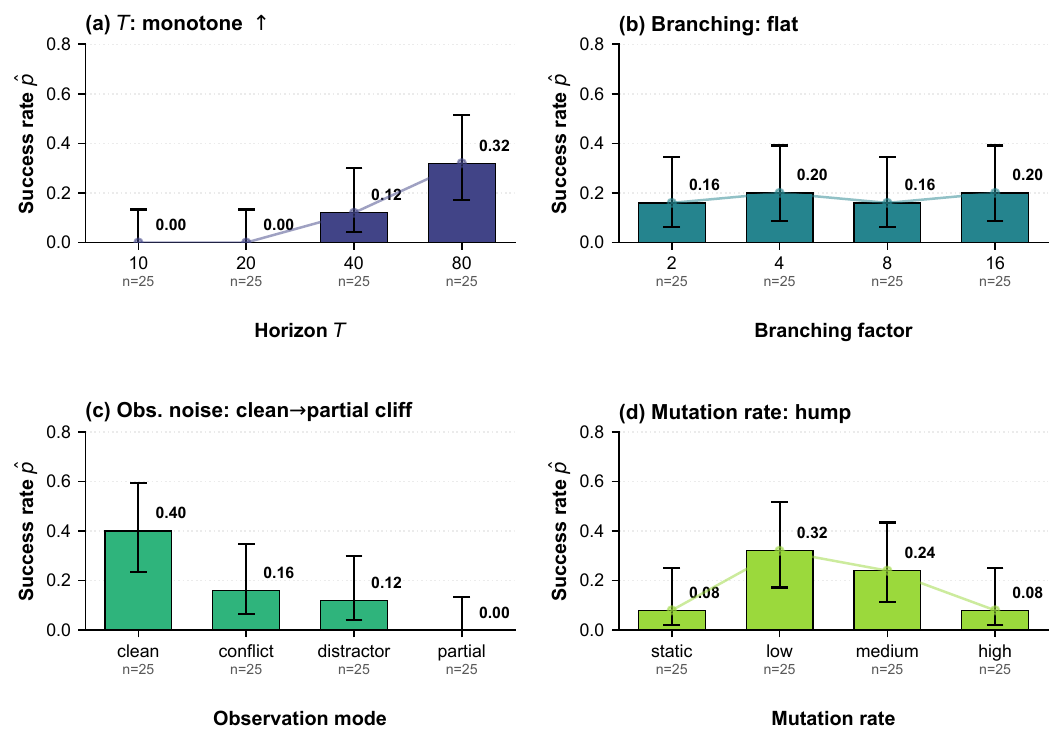}
\caption{Single-axis ablations around a transition-zone
backdrop.  \emph{(a)} Horizon acts as an enabler: too few steps
prevent success, but more steps do not define the phase
boundary.  \emph{(b)} Branching is the intended null, showing
that the main cliff is not a search-tree artifact.
\emph{(c)} Observation mode acts as a visibility gate: hiding
state slots is more damaging than adding visible distractors.
\emph{(d)} Mutation rate is descriptive rather than a confirmed
driver.}
\label{fig:ablations4panel}
\end{figure*}

\begin{table}[t]
\centering
\small
\setlength{\tabcolsep}{4pt}
\caption{Full secondary-axis ablations around the
transition-zone backdrop: $\textsc{sc}{=}10$,
$\textsc{dd}{=}6$, baseline horizon and branching, clean
observations, and static mutation.  The table supports the
role assignment in Figure~\ref{fig:ablations4panel}:
horizon is enabling, branching is an intended null,
observation is a visibility gate, and mutation remains
descriptive.}
\label{tab:g2_ablations}
\begin{tabular}{l l c}
\toprule
\textbf{Axis} & \textbf{Level} & $\hat{p}$ \\
\midrule
$T$            & $T{=}10$ & 0.000 \\
               & $T{=}20$ & 0.000 \\
               & $T{=}40$ & 0.120 \\
               & $T{=}80$ & 0.320 \\
\addlinespace
Branching      & ${=}2$   & 0.160 \\
               & ${=}4$   & 0.200 \\
               & ${=}8$   & 0.160 \\
               & ${=}16$  & 0.200 \\
\addlinespace
Obs.\ noise    & clean      & 0.400 \\
               & conflict   & 0.160 \\
               & distractor & 0.120 \\
               & partial    & 0.000 \\
\addlinespace
Mut.\ rate     & static  & 0.080 \\
               & low     & 0.320 \\
               & medium & 0.240 \\
               & high    & 0.080 \\
\bottomrule
\end{tabular}
\end{table}

\paragraph{$T$ (enabling).}
Horizon determines whether the agent has enough interaction
budget to finish, but it does not itself define the
world-model boundary.  This generalizes context-budget
accounts~\citep{liu2023lostinmiddle,
chung2025longcontextwebagents,zhou2025mem1,luo2025ultrahorizon}:
sufficient $T$ is necessary but not sufficient.

\paragraph{Branching (intended null).}
Branching is nearly flat in this backdrop.  The main cliff
therefore cannot be explained as a hidden search-tree blow-up.

\paragraph{Observation noise (visibility gate).}
\label{sec:results:ablations:obs}
The observation modes separate by whether the true state
remains visible.  Partial observation hides state slots from
the Updater; distractor and conflict modes leave the relevant
state available amid noise.  \emph{Observability}, not nominal
noise, couples this axis to the
transition~\citep{liu2023lostinmiddle,
ayyachamy2025contextdiscipline}.

\paragraph{Mutation rate (descriptive).}
\label{sec:results:ablations:mut}
Mutation rate shows a suggestive low-rate hump, which we retain
only as a candidate informativeness modulator.

\paragraph{Synthesis.}
The four axes play different roles--enabler, null, visibility
gate, and descriptive modulator.  That diversity is exactly
what a single scalar ``effective difficulty'' account would
struggle to explain.  The dominant structure remains the
two-dimensional $(\textsc{sc},\textsc{dd})$ boundary.
Table~\ref{tab:cliffs_delta_g2} provides a compact effect-size
view of the same conclusion: branching stays negligible, while
observation changes matter when they remove the state needed by
the Updater.

\begin{table}[t]
\centering
\footnotesize
\setlength{\tabcolsep}{2pt}
\caption{Cliff's $\delta$ for each ablation level against its
backdrop.  The negligible branching effects support the
search-tree null; the observation effects are larger because
partial observability removes state slots from the updater;
mutation effects remain small to medium and are treated as
descriptive.  Codes denote negligible ($n$), small ($S$), and
medium ($M$) effects.}
\label{tab:cliffs_delta_g2}
\resizebox{\columnwidth}{!}{%
\begin{tabular}{@{}llrl@{}}
\toprule
\textbf{axis} & \textbf{levels vs.\ base} & $\delta$ & \\
\midrule
$T$         & $10/20/80$ vs.\ $40$  & $-0.12,-0.12,+0.20$ & $n,n,S$ \\
Branching   & $2/8/16$ vs.\ $4$     & $-0.04,-0.04,+0.00$ & $n,n,n$ \\
Obs.\ noise & partial/distractor/conflict vs.\ clean & $-0.40,-0.28,-0.24$ & $M,S,S$ \\
Mut.\ rate  & low/medium/high vs.\ static & $+0.24,+0.16,+0.00$ & $S,S,n$ \\
\bottomrule
\end{tabular}
}
\end{table}

\subsection{Critical-Point Localisation}
\label{sec:results:critical}

The doubled grid shows that the main cliff lies between
$\textsc{sc}{=}10$ and $\textsc{sc}{=}20$.  We refine that
picture with two scans in Figure~\ref{fig:critical_points}:
one along \textsc{sc}, where the boundary should live, and one
along horizon, where a context-budget account would expect a
similar threshold.
This paired design is a negative control as much as a
localization tool.  A phase-transition story predicts a narrow
crossing along the structural state axis.  A horizon-budget
story predicts that the same kind of crossing should appear
when the interaction length is swept at fixed structure.  Only
the first prediction is borne out.

\begin{figure*}[t]
\centering
\includegraphics[width=\textwidth]{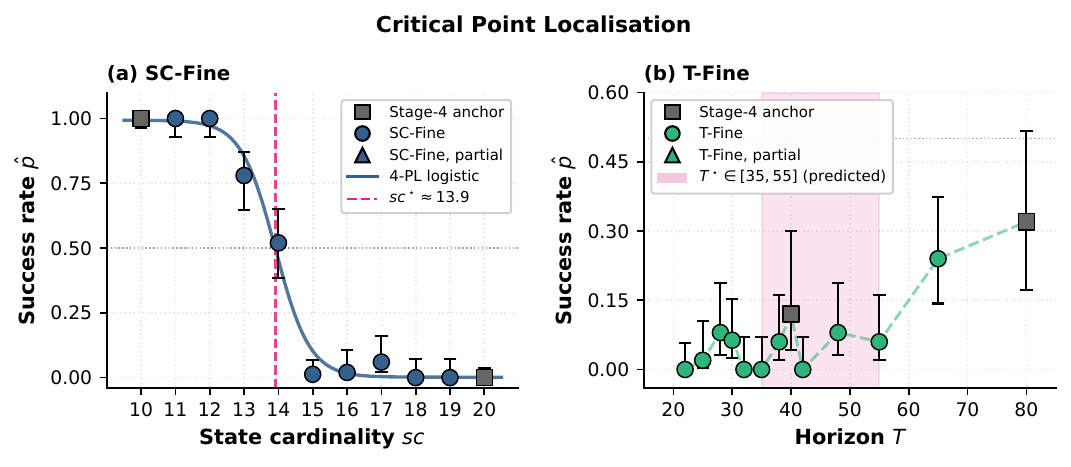}
\caption{Critical-point scans for claude-haiku-4-5 on
StatefulPuzzle.  \emph{(a)} SC-Fine resolves the doubled-grid
cliff by sweeping integer \textsc{sc} values between the
plateau and the floor; the crossover lies near
$\textsc{sc}^\star \approx 13.5$.  \emph{(b)} T-Fine remains
flat inside the same operating region, so no analogous
$T^\star$ is resolved.}
\label{fig:critical_points}
\end{figure*}

\paragraph{SC-Fine.}
\label{sec:results:critical:sc}
The \textsc{sc} scan turns the coarse cliff into a localized
critical region.  Values just below the boundary remain on the
plateau, values just above it fall rapidly to the floor, and
the crossover lies near
$\textsc{sc}^\star \approx 13.5$.  Thus the doubled-grid
cliff is not an artifact of sparse sampling; it is a narrow
boundary visible at unit-integer resolution
(Table~\ref{tab:scfine}).

\begin{table}[t]
\centering
\small
\setlength{\tabcolsep}{4pt}
\caption{SC-Fine scan at fixed $\textsc{dd}{=}1$ for
claude-haiku-4-5, with confirmatory-grid endpoints included as
references.  The crossover is bracketed by
$\textsc{sc}{=}13$ and $\textsc{sc}{=}14$; by
$\textsc{sc}{=}15$, the system is already on the collapse side
of the boundary.}
\label{tab:scfine}
\begin{tabular}{r r r c}
\toprule
\textbf{\textsc{sc}} & \textbf{succ} & \textbf{n} & $\hat{p}$ \\
\midrule
$10^\ast$ & 100 & 100 & 1.000 \\
11        &  50 &  50 & 1.000 \\
12        &  50 &  50 & 1.000 \\
13        &  39 &  50 & 0.780 \\
14        &  26 &  50 & 0.520 \\
15        &   1 &  78 & 0.013 \\
16        &   1 &  50 & 0.020 \\
17        &   3 &  50 & 0.060 \\
18        &   0 &  50 & 0.000 \\
19        &   0 &  50 & 0.000 \\
$20^\ast$ &   0 & 100 & 0.000 \\
\bottomrule
\end{tabular}
\\[1pt]
\footnotesize $\ast$ Confirmatory-grid endpoint.
\end{table}

\paragraph{T-Fine.}
\label{sec:results:critical:t}
The $T$ scan behaves differently.  Inside the transition-zone
backdrop, additional horizon does not reveal a comparable
critical point.  Horizon remains an enabling resource, but the
phase boundary is governed by the structural axes.  We
therefore retract the earlier expectation of a resolved
$T^\star$ and treat $T$ as a modulator rather than a driver
(Table~\ref{tab:tfine}).

\begin{table}[t]
\centering
\small
\setlength{\tabcolsep}{4pt}
\caption{T-Fine at fixed $(\textsc{sc},\textsc{dd})=(10,6)$
for claude-haiku-4-5, with the coarse ablation reference cells
marked by $\ast$.  Unlike SC-Fine, this interior horizon scan
does not reveal a monotone critical boundary; it supports the
interpretation of $T$ as an enabling modulator.}
\label{tab:tfine}
\begin{tabular}{r r r c}
\toprule
\textbf{$T$} & \textbf{succ} & \textbf{n} & $\hat{p}$ \\
\midrule
22         & 0 & 62 & 0.000 \\
25         & 1 & 50 & 0.020 \\
28         & 4 & 50 & 0.080 \\
30         & 4 & 63 & 0.063 \\
32         & 0 & 51 & 0.000 \\
35         & 0 & 50 & 0.000 \\
38         & 3 & 50 & 0.060 \\
$40^\ast$  & 3 & 25 & 0.120 \\
42         & 0 & 50 & 0.000 \\
48         & 4 & 50 & 0.080 \\
55         & 3 & 47 & 0.064 \\
$80^\ast$  & 8 & 25 & 0.320 \\
\bottomrule
\end{tabular}
\end{table}

\subsection{Boundary Translation Across Models}
\label{sec:results:critical:capacity}

Combining the fine scan, the gpt-4o-mini grid, and the
cross-platform probes, Table~\ref{tab:capacity_ranking} reads
the boundary as a model-dependent capacity marker.  The table
is qualitative by design: the claim is boundary translation,
not a definitive ranking among all model families.
This distinction keeps the interpretation conservative.  The
exact crossing depends on both the model and the harness, but
the relevant invariant is the shape of the surface: models move
from one operating point to another on the same structural
diagram.

\begin{table}[h]
\centering
\footnotesize
\setlength{\tabcolsep}{3pt}
\caption{Cross-model boundary readout on the
$\textsc{dd}{=}1$ row.  Stronger or better-supported models
move the collapse boundary outward, but the plateau--transition
--floor shape remains.  Llama-3 70B is included as a
cross-platform lower-bound probe.}
\label{tab:capacity_ranking}
\resizebox{\columnwidth}{!}{%
\begin{tabular}{@{}lll@{}}
\toprule
\textbf{model} & \textbf{boundary readout} & \textbf{interpretation} \\
\midrule
Llama-3 70B      & below tested low-\textsc{sc} boundary & lower-bound probe \\
claude-haiku-4-5 & $\textsc{sc}^\star \approx 14$ & localized cliff \\
gpt-4o-mini      & high-teens boundary & boundary shifted outward \\
GPT-4o           & between low and mid \textsc{sc} & strong interior lift, cliff remains \\
\bottomrule
\end{tabular}
}
\end{table}

\paragraph{Reading the table.}
Llama-3 70B sits below the boundary in the tested cells, with
an endpoint-determinism caveat discussed in the Limitations.
Claude-haiku-4-5 gives the cleanest localized boundary.  The
OpenAI checkpoints shift several difficult cells upward, but
still encounter a state-cardinality cliff.  The comparison
therefore supports the same conclusion as
Figure~\ref{fig:cross_model_comparison}: capability translates
the boundary.

\begin{table}[t]
\centering
\footnotesize
\setlength{\tabcolsep}{3pt}
\caption{Cross-model $\textsc{sc}^\star$ readout on the
$\textsc{dd}{=}1$ row.  The table gives a compact quantitative
summary of the qualitative boundary translation in
Table~\ref{tab:capacity_ranking}.  Llama-3 70B does not bracket
the 50\% crossing in the populated cells and is therefore
reported as a lower-tier probe.}
\label{tab:scstar_readout}
\resizebox{\columnwidth}{!}{%
\begin{tabular}{l c l}
\toprule
\textbf{model} & $\textsc{sc}^\star$ & \textbf{readout} \\
\midrule
claude-haiku-4-5 & $14.04$ & localized boundary \\
gpt-4o-mini      & $18.94$ & shifted outward \\
GPT-4o           & $15.56$ & strong interior lift \\
Llama-3 70B      & $9.8$ & below tested low-\textsc{sc} boundary ($<10$) \\
\bottomrule
\end{tabular}
}
\end{table}

\paragraph{Interpretation.}
We do not interpret the table as a universal capability
leaderboard.  Architecture, endpoint behavior, and harness
compatibility can all move the observed boundary.  What
survives those caveats is the phase-transition form itself.
Table~\ref{tab:scstar_readout} gives the crossing-location
readout where the crossing is bracketed, and
Table~\ref{tab:cohen_h_pairs} shows the complementary cellwise
effects.  Together they separate two claims: models can move
specific operating points substantially, but degenerate floor
cells leave little room for capability to appear.

\begin{table}[t]
\centering
\footnotesize
\setlength{\tabcolsep}{3pt}
\caption{Cross-model Cohen's $h$ across shared cells.  The
large effects show that model choice can substantially move
particular operating cells, while the degenerate floor cells
show that once collapse has occurred, there is little remaining
room for capability to express itself.  Codes denote negligible
($n$), small ($S$), medium ($M$), and large ($L$) effects.}
\label{tab:cohen_h_pairs}
\begin{tabular}{l rl rl rl}
\toprule
\textbf{pair} & \multicolumn{2}{c}{$(10,1)$}
              & \multicolumn{2}{c}{$(10,6)$}
              & \multicolumn{2}{c}{$(20,1)$} \\
\midrule
haiku$-$mini     & $+0.57$ & M & $-0.80$ & L & $-1.47$ & L \\
haiku$-$GPT-4o   & $+0.00$ & $n$ & $-1.29$ & L & $-0.64$ & M \\
haiku$-$Llama-3  & $+2.21$ & L & $+0.93$ & L & $+0.00$ & $n$ \\
mini$-$GPT-4o    & $-0.57$ & M & $-0.48$ & S & $+0.83$ & L \\
mini$-$Llama-3   & $+1.64$ & L & $+1.73$ & L & $+1.47$ & L \\
GPT-4o$-$Llama-3 & $+2.21$ & L & $+2.21$ & L & $+0.64$ & M \\
\bottomrule
\end{tabular}
\end{table}

\subsection{\texorpdfstring{$T$}{T} as Modulator, Not Driver}
\label{sec:results:critical:tmodulator}

The T-Fine scan strengthens the \textsc{sc}-axis interpretation.
If the cliff were merely a reparameterized horizon effect, an
interior horizon scan should reveal a comparable boundary.
Instead, the scan remains flat inside the collapse band.
Horizon still matters as
an enabling resource--too little time can prevent success--but
it does not set the phase boundary.  The boundary is structural:
$(\textsc{sc},\textsc{dd})$ determine where the world model
fails, while $T$ determines whether the agent has enough steps
to exploit the state it can still maintain.

\section{Discussion}
\label{sec:discussion}

The results point to a simple failure model.  A long-horizon agent can plan only while its working world remains coherent.  When state cardinality and dependency density cross the critical region, the planner does not gradually become less clever; it acts on the wrong world.
This reframes what the heatmap is measuring.  The axes are not
surface difficulty knobs in the ordinary sense.  They control
how much of the world must be kept addressable and how often
future decisions must bind back to that representation.  The
collapse is therefore a failure of maintained structure, not
just a failure of final-step reasoning.

\paragraph{Cognitive precedence: world state before plan.}
\label{sec:discussion:mechanism}
\label{sec:discussion:gradual}
The strongest mechanistic evidence is temporal.  World-state
collapse appears before plan collapse.  This ordering is not a
minor diagnostic detail: it says which component fails first.
Once the Updater loses the relevant state, the Planner receives
a stale or inconsistent view and action validity follows it
down.  This is why the original symmetric synchrony test was
the wrong story; the failures are coupled, but not
instantaneous.

\paragraph{Phase-transition geometry.}
Figure~\ref{fig:intuition} summarizes the geometry.  The
$(\textsc{sc},\textsc{dd})$ plane has a solved regime, a
transition zone, and a collapse floor.  State cardinality sets
the compositional load; dependency density controls how tightly
subgoals are coupled to that state.  The boundary moves
leftward as dependencies become denser, but the floor remains
flat once state load is beyond capacity.  This is the feature
that distinguishes the result from smooth drift
~\citep{lee2026canonical} and from metric-only accounts of
apparent emergence~\citep{schaeffer2023mirage}.

\paragraph{Capability and scaffolding.}
The cross-model probes suggest that capability translates the
boundary rather than changing the shape of the diagram.  This
matters for agent design.  If the first failing component is
the world model, then simply asking the same planner to reason
harder is unlikely to solve the problem.  More promising
interventions externalize or decompose state: structured
memories~\citep{packer2024memgpt,park2023generative,
sumers2024cognitive,arslan2026aeon,shi2025rememr1}, planner--solver
hybrids~\citep{kambhampati2024modulo,liu2023llmp,
silver2024generalized}, or explicit world-state
representations~\citep{zhu2026pddlmind}.  The aim is to push
$\textsc{sc}^\star$ outward by supporting the representation
that fails first.
This also changes how scaffolds should be evaluated.  A memory
module should not only improve final success; it should delay
$\tau_W$.  A planner should not only recover after invalid
actions; it should avoid conditioning on stale state.  A
benchmark should therefore report where the boundary moves,
not only whether the mean score improves.

\paragraph{Cross-disciplinary link and implications.}
\label{sec:discussion:phaselink}
\label{sec:discussion:design}
\label{sec:discussion:openq}
The evidence resembles physical phase-transition phenomenology: a sharp control-axis boundary, flat plateau and floor, a model-specific critical location $\textsc{sc}^\star$, a narrow transition band, and a precursor signal.  The same vocabulary has been productive
across deep learning~\citep{wei2022emergent,
schaeffer2023mirage,nakaishi2024criticalphase,
power2022grokking,olsson2022induction,
belkin2019doubledescent,bahri2020statmech,roberts2022principles}.
We do not claim a thermodynamic-limit phase
transition, but the analogy is useful: it tells us to look for
boundaries, precursors, and model-specific critical locations.
For evaluation, this means averaged benchmark scores are not
enough.  A benchmark that does not vary state load and
dependency load independently can average over the cliff and
make a threshold look like ordinary drift.
The practical implication is to complement realistic agent
benchmarks with small controlled stress grids.  Realistic tasks
tell us whether an agent is useful; controlled grids tell us
why a useful-looking agent fails.  The two are not substitutes.
The phase diagram gives a compact diagnostic for one failure
mode that broad benchmarks can otherwise hide.

\section{Conclusion}
\label{sec:conclusion}

Long-horizon LLM agent collapse is better described as a world-model phase transition than as ordinary gradual drift.  When state cardinality and dependency density cross a critical region, the agent first loses the represented world and only then loses valid action.  Fine scans localize this boundary,
cross-model probes show that stronger models shift it rather
than erase it, and secondary-axis ablations show that horizon,
branching, observation, and mutation play distinct supporting
roles.  The practical lesson is direct: world-model capacity is
a measurable, model-specific bottleneck.  Evaluations that only
average final success over naturalistic tasks can hide this
boundary; reliable long-horizon agents require stress grids,
per-step state instrumentation, and scaffolds that support the
world representation before the planner fails.  The broader
message is that agent evaluation should measure the state the
agent thinks it is acting in, not only the action it finally
takes.

\section{Future Work: Multiple agent phases}
\begin{figure}[t]
    \centering
    \includegraphics[width=\columnwidth]{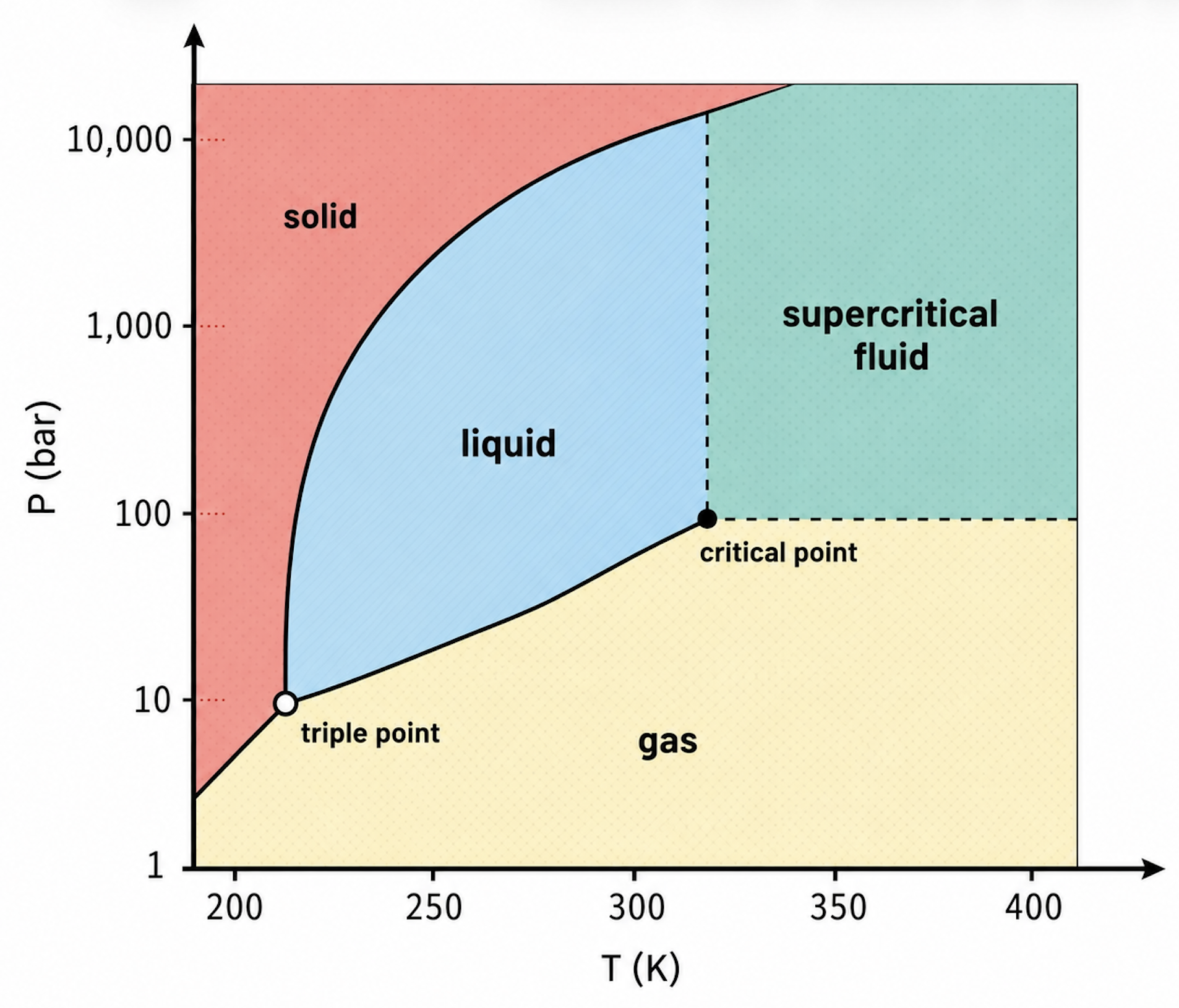}
    \caption{Author-drawn schematic of the phase diagram of water. Depending
    on temperature and pressure, water can occupy solid, liquid, gas, and
    supercritical regimes. The triple point marks the coexistence of the
    solid, liquid, and gas phases, while the critical point terminates the
    liquid--gas coexistence boundary. The diagram is intended as a conceptual
    illustration rather than a quantitatively scaled phase diagram.}
    \label{fig:water_phase_diagram}
\end{figure}

The present study uses final task success as a binary order parameter and
therefore distinguishes a solved regime from a collapsed regime, separated by
a finite transition band. This binary formulation is useful for identifying a
clear collapse boundary, but it cannot represent a richer set of agent states.

Physical phase diagrams illustrate why such richer structure may matter. As
shown in the author-drawn water phase diagram in
Figure~\ref{fig:water_phase_diagram}, temperature and pressure separate solid,
liquid, gas, and supercritical regimes. The liquid--gas coexistence boundary
ends at a critical point, while the triple point marks a condition under which
three phases coexist. Thus, a physical system can exhibit multiple phases,
multiple boundaries, and qualitatively different transition structures within
the same control space.

Our current evaluation cannot express an analogous structure because each
trajectory is ultimately reduced to success or failure. This does not imply
that agent behavior has only two macroscopic forms. Distinct regimes may
include stable state tracking with valid planning, recoverable state
corruption, persistent but behaviorally compensated world-model error,
incoherent planning, and complete representational collapse.

Future work could replace the binary order parameter with a multidimensional
one based on world-state fidelity, action validity, recoverability,
uncertainty, and planning consistency. A larger control space could also vary
model capacity, memory design, observation quality, horizon, state
cardinality, and dependency density jointly. Such studies may reveal multiple
agent phases, several transition boundaries, coexistence regions, and distinct
critical points. The broader question is therefore not only where an agent
changes from success to failure, but also how many stable behavioral and
representational regimes exist between reliable reasoning and complete
collapse.

\section*{Limitations}
\label{sec:limitations}

\paragraph{Controlled setting.}
Our confirmatory claim is restricted to StatefulPuzzle, the environment selected by the locked pilot rule, and to a fixed three-call Planner/Updater/Self-Diag harness with structured working memory. GraphNav and ToolDAG serve as negative pilots rather than confirmatory replications. We also do not evaluate naturalistic benchmarks such as WebArena~\citep{zhou2023webarena}, SWE-Bench~\citep{jimenez2023swebench}, or TravelPlanner~\citep{xie2024travelplanner}. The cross-model results should therefore be interpreted as shifts of the collapse boundary within this controlled model--harness setting, rather than as universal claims or clean capability rankings. Horizon, branching, observation mode, and mutation rate are included mainly to rule out simple alternative explanations, while the observed failure order supports world-state collapse preceding plan collapse rather than strict synchrony.

\paragraph{Finite-grid phase transition.}
We use ``phase transition'' in an operational finite-grid sense. Across the tested $(\textsc{sc},\textsc{dd})$ grid, the success surface exhibits a high-success plateau, a narrow transition region, and a collapse floor. The axes are sampled at discrete engineering-tractable levels, with finer integer-resolution evaluation near the estimated boundary, and reported boundary locations are obtained by interpolation over this grid. We do not claim a thermodynamic-limit transition, estimate critical exponents, or derive scaling laws. Observation noise and mutation rate are categorical factors, so their effects should be interpreted as ordered category-level patterns rather than continuous thresholds.

% ----- Bibliography (acl.sty already sets \bibliographystyle{acl_natbib}) -----
\bibliography{REFS}

% ----- Appendix (no page limit) -----
\clearpage
\appendix

\section{Proof of the Grid-Bracketing Criterion}
\label{app:proofs}

\begin{proposition}[Grid bracketing under monotonicity]
Fix dependency density $d$ and held-fixed factors $z$.  Suppose
$p_\theta(s,d;z)$ is non-increasing in $s$ on an interval
$[s_i,s_{i+1}]$ and that
$p_\theta(s_i,d;z)>1/2>p_\theta(s_{i+1},d;z)$.  Then every
right-continuous version of the critical location
$s_\theta^\star(d;z)=\inf\{s:p_\theta(s,d;z)\le1/2\}$ lies in
$[s_i,s_{i+1}]$.  If the observed grid has no cell with
success estimate in $[\eta,1-\eta]$ for
$0<\eta<1/2$, then the observed transition width at that grid
resolution is zero under the point-estimate criterion.
\end{proposition}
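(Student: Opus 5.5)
The plan is to treat the two assertions separately; both are elementary order-theoretic arguments about the set $S=\{s:p_\theta(s,d;z)\le 1/2\}$, with $d$ and $z$ fixed throughout so that $p(s)\coloneqq p_\theta(s,d;z)$ is a function of one variable.

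For the bracketing claim, I will first read the infimum as taken over the interval $[s_i,s_{i+1}]$ (or over the ambient domain, with monotonicity assumed to the left of $s_i$ as well). Then I will show two inequalities.

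\textbf{Upper bound.} Since $p(s_{i+1})<1/2$, the point $s_{i+1}$ lies in $S$, so $s^\star=\inf S\le s_{i+1}$.

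\textbf{Lower bound.} Take any $s<s_i$ in the domain where monotonicity holds. Then $p(s)\ge p(s_i)>1/2$, so $s\notin S$. The point $s_i$ is also not in $S$, because $p(s_i)>1/2$. Hence every element of $S$ is strictly greater than $s_i$, and $\inf S\ge s_i$.

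Together these give $s^\star\in[s_i,s_{i+1}]$. Because $p$ is non-increasing, $S$ is an up-set, i.e.\ an interval of the form $[c,\infty)$ or $(c,\infty)$ intersected with the domain. Choosing a right-continuous version of $p$ only decides whether the endpoint $c$ belongs to $S$; it does not move $c$. So the infimum, and therefore the bracketing, is the same for every right-continuous version. I will state this explicitly, since it is the content of the phrase ``every right-continuous version''.

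For the width claim, I will define the observed transition width at grid resolution as the number (or measure) of grid cells whose point estimate $\widehat p_\theta$ lies in $[\eta,1-\eta]$. The hypothesis says this set of cells is empty, so the width is zero by definition. In that case the grid jumps directly from a value above $1-\eta$ to a value below $\eta$ between adjacent cells. If the $\widehat p$ are taken as stand-ins for $p$, this reproduces the bracketing hypothesis of the first part with margin.

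The main obstacle is not technical but definitional. The statement leaves the domain of the infimum and the notion of ``transition width'' implicit, and monotonicity is only assumed on $[s_i,s_{i+1}]$. I would resolve this by restricting the infimum to that interval, or by noting that the grid-level claim only needs the interval. I would also stress that the width statement is about point estimates, not the true $p$.
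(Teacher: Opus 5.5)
Your proposal is correct and follows essentially the same route as the paper's proof: $s_{i+1}$ shows the sublevel set is nonempty, monotonicity keeps that set from starting before $s_i$, and the width claim holds by definition of the point-estimate count. You also note that monotonicity must extend to the left of $s_i$, or else the infimum must be restricted to $[s_i,s_{i+1}]$; this makes precise what the paper's phrase ``cannot begin before $s_i$ under monotonicity'' silently assumes.
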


\begin{proof}
Because $p_\theta(s_i,d;z)>1/2$, the set
$\{s:p_\theta(s,d;z)\le1/2\}$ cannot begin before $s_i$ under
monotonicity.  Because
$p_\theta(s_{i+1},d;z)<1/2$, the same set is nonempty by
$s_{i+1}$.  Its infimum therefore lies in
$[s_i,s_{i+1}]$.  For the grid-width statement, the
point-estimate transition width counts grid points whose
estimated success lies inside the middle band
$[\eta,1-\eta]$.  If no grid point satisfies that predicate,
the count is zero by definition.  The proposition is used only
to justify the operational bracketing of a finite-grid
crossover; it does not assert a thermodynamic-limit phase
transition.
\end{proof}

\section{Environment Specifications}
\label{app:envs}

All three environments expose the same transition contract.  An
episode is initialized from a configuration and seed, advances
one action at a time, and exposes an exact gold state together
with oracle labels for action validity and error type.  The gold
state is a structured record of all world facts, and no
environment-side randomness is introduced beyond the seed.

\paragraph{Graph Navigation (GraphNav).}
Agents navigate a room-and-door graph under key, switch, and decoy
constraints.  \textsc{sc} is the node count; \textsc{dd} is
the number of preconditions to unlock each door.  Error labels:
\texttt{nonexistent\_edge}, \texttt{missing\_key},
\texttt{stale\_inventory}.

\paragraph{Tool-DAG (ToolDAG).}
Agents execute a directed acyclic graph of typed tool calls,
maintaining a variable namespace.  \textsc{sc} is the number of
active variables; \textsc{dd} is the number of typed input
arguments per tool.  Error labels: \texttt{missing\_argument},
\texttt{skipped\_dependency}, \texttt{fabricated\_result}.

\paragraph{Stateful Puzzle (StatefulPuzzle).}
Agents manipulate objects across rooms, containers, and item slots
through ordered subgoal chains.  \textsc{sc} counts
rooms+containers+items; \textsc{dd} is the number of preconditions
per subgoal.  Concretely: sc${=}5\to$2 rooms, 1 container, 2
items, 2 subgoals;
sc${=}10\to$3 rooms, 3 containers, 4 items, 3 subgoals;
sc${=}20\to$5 rooms, 6 containers, 9 items, 4 subgoals;
sc${=}40\to$8 rooms, 12 containers, 20 items, 6 subgoals.
Action space: \texttt{go}, \texttt{take}, \texttt{put},
\texttt{open}/\texttt{close}, \texttt{use}, \texttt{combine},
\texttt{activate}, \texttt{examine}, \texttt{finish\_subgoal},
\texttt{noop}.  Error labels: \texttt{precondition\_violation},
\texttt{object\_location\_error}, \texttt{stale\_room\_state}.

\section{Stress-Regime Specification}
\label{app:regimes}

The five \emph{world regimes} used in our experimental design fix
all stress axes that are not under direct manipulation in a given
study.  Each regime is a tuple over six axes: state cardinality
(\textsc{sc}), dependency density (\textsc{dd}), horizon $T$,
branching factor, observation noise mode, and mutation rate.
Table~\ref{tab:regimes} lists the fixed backdrops used across
the confirmatory grid, ablations, and fine scans.

\begin{table}[h]
\centering
\footnotesize
\setlength{\tabcolsep}{2pt}
\caption{Five world regimes used as fixed backdrops across the
study.  Regime III is the confirmatory-grid backdrop; the
single-axis ablations probe one axis at a time from Regime III;
SC-Fine and T-Fine probe along sc and $T$ respectively from
Regime III.}
\label{tab:regimes}
\begin{tabular}{l l l l l l l}
\toprule
\textbf{Regime} & \textsc{sc} & \textsc{dd} & $T$ & branch & obs.\ noise & mut.\\
\midrule
I (Trivial)     & 5  & 1 & 40 & 4 & clean    & static \\
II (Moderate)   & 10 & 2 & 40 & 4 & clean    & static \\
III (Coupled)   & 10 & 6 & 40 & 4 & clean & static \\
IV (Adversarial)& 20 & 4 & 40 & 4 & partial & low    \\
V (Extreme)     & 40 & 6 & 80 & 8 & conflict& medium \\
\bottomrule
\end{tabular}
\end{table}
Regime III holds horizon, branching, observation noise and mutation
rate at their baseline values and sweeps (\textsc{sc}, \textsc{dd})
on the main $4\times4$ stress grid.

\section{Memory Representation}
\label{app:memory_modes}

The three-call loop can be coupled to several memory
representations: raw step transcripts, rolling natural-language
summaries, or an explicit structured world state maintained by
the Updater.  All experiments reported in this paper use the
structured-memory representation, because the purpose is to
measure when an explicit world model remains stable under
controlled stress.  Transcript-only and summary-only variants are
reserved for a separate memory-architecture study.

\section{Agent Loop Interface}
\label{app:schemas}

The structured-memory agent communicates through three typed
interfaces.  The Planner proposes exactly one action, states the
preconditions it believes support that action, predicts the
action's local effects, and reports a scalar self-rating.  The
simulator receives only the proposed action; the self-rating is
recorded but never used to rescue or veto the choice.

The Updater is responsible for the explicit world state.  Given
the latest observation, it adds newly true facts, removes facts
made stale by the transition, and emits the complete state that
will condition the next Planner call.  Self-Diag then provides an
independent judgment of the proposed action: whether it appears
valid, which preconditions appear missing, and whether replanning
would be preferred.  This diagnostic is observational rather than
interventional, so the environment remains the sole arbiter of
success or failure.  Malformed interface outputs are repaired by a
bounded retry rule and otherwise mapped to deterministic defaults,
ensuring that parsing failures do not become an unmodeled source
of stochasticity.

\section{Budget and Stopping Rules}
\label{app:budget_triggers}

The evaluation uses fixed resource ceilings to prevent
pathological episodes from dominating the grid.  Each episode
has an output-token ceiling of 80k, a wall-time ceiling of
30 minutes, and a maximum of three repair attempts for malformed
interface outputs.  The fallback-rate trigger was fixed before
the confirmatory run and used only as a safeguard against an
invalid measurement regime.  In practice, fallback was rare and
did not determine any headline effect, so the ceilings should be
read as run-control safeguards rather than experimental
variables.

\section{Evaluator and Metrics}
\label{app:eval}

The evaluator compares the agent state and action against the
environment's deterministic gold oracle after each transition.
Five per-step quantities are recorded:
\begin{itemize}[noitemsep,topsep=2pt]
  \item world-state accuracy: Jaccard($\hat{W}_t$, $W_t^\ast$)
        between agent-maintained world state and gold;
  \item action validity: gold preconditions for the chosen
        action all hold in $W_t^\ast$;
  \item world consistency: $\hat{W}_t$ satisfies the environment's
        invariant predicates;
  \item dependency correctness: required-precondition set declared
        by the Planner matches gold;
  \item self-check accuracy: agreement between Self-Diag verdict
        and gold action-validity verdict.
\end{itemize}
Episode-level metrics include final success (gold goal predicate
satisfied at episode end), collapse onset $\tau_o$ (first step
of a 3-of-next-5 bad window), and collapse type
(\emph{world-state}, \emph{action-validity}, \emph{self-check},
or \emph{compound}).

\section{Previous Decisions}
\label{app:preregistration}

The following decisions were fixed before confirmatory data
collection.  The lock separates the primary phase-transition
claim from later scans that help interpret the geometry of the
transition surface.
\begin{itemize}[noitemsep,topsep=2pt]
  \item Goal G1 acceptance criterion: G1a (cliff existence) AND
        G1b (multi-metric synchrony); both required for Goal~G1.
  \item G1a primary statistic: Miettinen--Nurminen score test for
        $H_0{:} \Delta p \le 0.30$ vs.\ $H_\mathrm{alt}{:} \Delta p > 0.30$,
        one-sided, $\alpha = 0.01$.
  \item G1b synchrony statistic: per-metric one-sided Mann--Whitney
        $U$ test on world-state accuracy, action validity, and
        self-check accuracy at the locked trigger pair, Holm-corrected
        at $\alpha = 0.01$, with Hodges--Lehmann
        $\hat{\Delta}_k \ge 0.20$ for each metric.
        \textit{G1b did not pass; the operational mechanism claim
        therefore rests on the precedence analysis reported in
        the main text.}
  \item Stress grid: \textsc{sc} $\in \{5,10,20,40\}$, \textsc{dd}
        $\in \{1,2,4,6\}$; $n = 100$ episodes per cell.
  \item Backdrop axes: Regime~III (Coupled).
  \item Collapse onset definition: 3-of-next-5 bad steps, where
        $\mathrm{bad}(t) := \lnot\texttt{action\_valid} \lor
        \lnot\texttt{world\_consistent} \lor
        \lnot\texttt{dependency\_correct}$.
  \item Trigger-environment selection rule: the selection rule was
        fixed before the pilot sweep and then applied to choose the
        confirmatory environment.
  \item N-scaling diagnostic: $W_\mathrm{trans}(d) := |\{N{:} 
        0.30 \le \hat{p}(N,d) \le 0.70\}|$ via point estimate.
\end{itemize}

\section{Statistical Analysis}
\label{app:stats}

The previous primary analysis tests the existence of the
main cliff.  Secondary axis tests ask which ingredients open or
close the brittle regime: horizon, branching, observation mode,
and mutation rate.  Under both Bonferroni control at
$\alpha=0.01$ and Benjamini--Hochberg control at $q=0.05$, the
same substantive effects remain: the main cliff, the horizon
enabling effect, and the observation-visibility effect.
Branching is the intended null, and mutation is treated only as
a descriptive modulation.

The later scans are not additional confirmatory hypotheses.  SC-Fine
localizes the state-cardinality boundary, T-Fine asks whether a
comparable horizon boundary appears, and the cross-model probes
read out how the same boundary geometry translates under different
model--harness pairs.  The abstract, introduction, and conclusion
therefore restrict their claims to the primary family and to the
qualitative replication of the phase-diagram geometry.

For the primary cliff test we use the Miettinen--Nurminen score
statistic
\[
T = \frac{(\hat{p}_1 - \hat{p}_2) - \delta_0}
        {\sqrt{\tilde p_1(1-\tilde p_1)/n_1
             + \tilde p_2(1-\tilde p_2)/n_2}},
\]
where $\delta_0 = 0.30$ and $(\tilde p_1, \tilde p_2)$ are the
constrained-MLE proportions under $H_0{:} p_1 - p_2 = \delta_0$
\citep{miettinen1985comparative}.  This one-sided test formalizes
the claim that the low-stress and high-stress cells are separated
by a practically large drop, rather than by a small monotone drift.

\section{N-Scaling Diagnostic}
\label{app:nscaling}

For each dependency-density column $d$, define
\[
\begin{aligned}
W_\mathrm{trans}(d) &= \sum_{N\in\{5,10,20,40\}} I_N(d),\\
I_N(d) &= \mathbf{1}\{0.30 \le \hat{p}(N,d) \le 0.70\}.
\end{aligned}
\]
The previous point-estimate variant uses $\hat{p}(N,d)$
directly; cells satisfying the band condition count toward
$W_\mathrm{trans}$.  The diagnostic is not an additional
acceptance test.  Its role is descriptive: it compresses the
finite grid into a statement about whether the transition region
broadens or sharpens across dependency-density columns.

\paragraph{Three patterns}
\begin{itemize}[noitemsep,topsep=2pt]
  \item Pattern~A (Sharpening): $W_\mathrm{trans}(d)$ is non-increasing
        in $d$ and asymptotes to a constant width $\le 1$.
  \item Pattern~B (Constant): $W_\mathrm{trans}(d)$ is constant in $d$.
  \item Pattern~C (Broadening): $W_\mathrm{trans}(d)$ is non-decreasing
        in $d$, consistent with a finite-size crossover.
\end{itemize}
The doubled confirmatory grid assigns no cell to the middle
band by point estimate.  SC-Fine then refines the
$d{=}1$ column by resolving the middle band into the two
integer locations $\textsc{sc}{=}13$ and $\textsc{sc}{=}14$.
The refinement narrows the boundary at unit resolution without
changing the previous verdict.

\section{Reproducibility}
\label{app:reproducibility}

The simulator side is fully deterministic.  Task seeds are
derived from the tuple consisting of the grid cell, archetype,
and instance identifier, yielding unique seeds across the
confirmatory grid.  Model calls use temperature zero, a fixed
maximum response length, and the retry policy described above.
The released materials include the deterministic environment
oracles, prompts, per-cell traces, and the pre-registration
record.  The analysis code recomputes the grid summaries and
the Miettinen--Nurminen test used for the primary cliff
criterion.

\section{Cross-Platform Model Probes}
\label{app:platforms}

The cross-model probes keep the structured-memory harness fixed
and vary only the served model and provider interface.  The
primary grid uses claude-haiku-4-5 through the
Anthropic Messages interface~\citep{anthropic2025haiku45}.  The
OpenAI probe uses gpt-4o-mini through Chat Completions
~\citep{openai2024gpt4omini}; the stronger OpenAI probe uses
GPT-4o through Azure OpenAI~\citep{openai2024gpt4osystem}; and
the open-weight probe uses Llama-3 70B Instruct through AWS
Bedrock~\citep{grattafiori2024llama3}.  All probes use the same
Planner, Updater, and Self-Diag prompts.  Providers differ in
whether they expose a deterministic seed parameter, so the
cross-model comparison is interpreted as a boundary-translation
probe rather than a bit-identical replication.

\end{document}